﻿﻿﻿\documentclass[letterpaper]{article}
\usepackage{aaai2027}
\usepackage[hyphens]{url}
\usepackage{graphicx}
\usepackage{natbib}
\usepackage{caption}
\usepackage{amsmath}
\usepackage{amssymb}
\usepackage{amsthm}
\usepackage{booktabs}
\usepackage{multirow}

\newtheorem{definition}{Definition}
\newtheorem{proposition}{Proposition}

\title{Heteroskedastic Signals in Budgeted LLM Verification:\\
Structural Heterogeneity Limits Optimization Gains}
\author{Jinlong Yang\\
\texttt{aestalon.young@gmail.com}}
\affiliations{}
\nocopyright

\begin{document}
\maketitle
\begin{abstract}
Selective-compute LLM systems decide which outputs merit verification, additional reasoning, tool execution, or human audit under a limited budget. It is natural to expect that stronger online optimization over a shared uncertainty or reward signal should improve these decisions. We take a critical look at this assumption and ask: when does optimizing harder fail because the signal is not decision-comparable across inputs?
In budgeted LLM verification, we find that uncertainty quality is heteroskedastic across cost strata: some regions exhibit near-random discriminability while concentrating many errors. Under an explicit local model, we characterize the resulting distortion of global allocation and show that its upper bound scales with cross-stratum signal-quality dispersion.
To separate weak signals from optimizer instability and structural mismatch, we introduce a controlled intervention hierarchy: Threshold, MP-Adapt, MP-Strat, and cost-stratified thresholding (CST). We then turn the diagnosis into Heterogeneity-Gated Allocation (HGA), which uses a warm-up comparability test to choose between global and cost-stratified allocation. Across MBPP and MATH using Qwen3-8B, LLaMA3-8B, and GPT-4o-mini, global online adaptation yields inconsistent gains over static thresholding; CST improves hit rate by up to 17 percentage points in strongly heterogeneous settings, while HGA preserves most gains and avoids blind stratification when the partition is not useful.
These findings suggest a resource-allocation principle for LLM systems: before optimizing harder over a shared proxy, test whether the proxy is decision-comparable across observable operating regimes, and gate structural specialization on that test.

\end{abstract}

\section{Introduction}
\label{sec:intro}

Many LLM systems must decide where limited computation should be spent. Verification, test-time scaling, tool execution, model routing, and agent auditing all require policies that rank inputs by uncertainty, confidence, or reward proxies, often adapting from sparse feedback \citep{guo2017calibration,kadavath2022language,kuhn2023semantic,hazan2016introduction}. It is therefore tempting to treat selective compute as an optimization problem: given a proxy and a budget, learn a better global allocation rule.

This view hides a stronger assumption. A shared allocation rule is valid only if the same score carries comparable decision value across the inputs competing for budget. We call this the \emph{global signal comparability assumption}. In practice, the assumption can fail: the same uncertainty score may be informative in one cost stratum but near-random in another. Then the bottleneck is not merely that the optimizer is weak; the objective being optimized is structurally conflated.

We study this failure mode through \emph{budgeted LLM verification}, a setting where allocation decisions, costs, and outcomes are measurable. Outputs can be validated through unit tests, sandbox execution, theorem checking, or tool invocation, but limited budgets preclude verifying every candidate \citep{chen2021evaluating,jimenez2024swe}. This gives a clean test of a broader AI decision question: does stronger global optimization solve selective compute, or is the shared proxy itself locally incomparable?

\begin{figure*}[t]
\centering
\includegraphics[width=\textwidth]{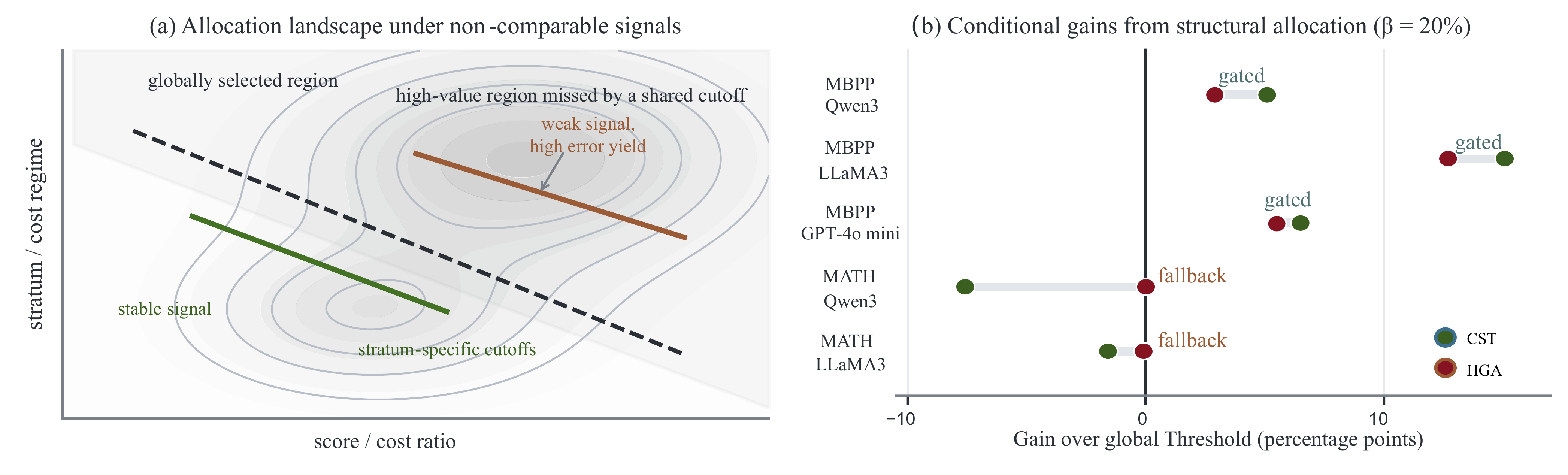}
\caption{Core mechanism and finding. \textbf{Left:} a globally shared threshold treats equal score-cost ratios as comparable, even when the proxy has different decision value across stable and weak-signal strata; stratified thresholds decouple these regimes. \textbf{Right:} at $\beta{=}20\%$, CST exposes when structure helps, while HGA retains positive stratification gains and falls back toward Threshold when fixed stratification is harmful.}
\label{fig:intro}
\end{figure*}

To answer this question, we need to distinguish three explanations that look similar in aggregate: weak signals, unstable online optimization, and feedback-structure mismatch. We therefore test a controlled intervention hierarchy that progressively relaxes global sharing. Threshold is the static global reference; MP-Adapt tests stronger global optimization; MP-Strat reduces optimization noise through partial stratification; and CST removes learning while fully stratifying allocation. CST is deliberately weak: if it beats a stronger globally shared learner, the failure is unlikely to be solved by optimizer sophistication alone. We then convert this diagnostic into Heterogeneity-Gated Allocation (HGA): a warm-up comparability test decides whether to trust the global threshold or open a cost-stratified gate. Figure~\ref{fig:intro} summarizes this mechanism and the resulting empirical pattern.

Across MBPP and MATH with Qwen3-8B, LLaMA3-8B, and GPT-4o-mini, MP-Strat provides partial recovery ($+2.8$pp) by reducing optimization noise, while CST achieves the largest gains ($+3.4$--$+17.0$pp) through structural decoupling in aligned settings. HGA keeps the diagnostic nature of CST but avoids treating stratification as universal: it matches Threshold on Qwen3-MATH, where CST underperforms, while retaining most CST gains on MBPP.

We theoretically analyze this phenomenon under bounded score densities, a locally Lipschitz quantile functional, and an explicit local location model. We show that when signal reliability varies across cost strata, a single global quantile can induce a structural allocation distortion whose bound grows with cross-stratum signal-quality dispersion. This analysis explains why improving a globally shared optimizer need not remove the observed failure.

The main contributions are as follows:
\begin{itemize}

\item We formalize global signal comparability as a hidden assumption in selective-compute decisions and identify its failure as a structural limitation of LLM resource allocation.

\item We develop a controlled diagnostic hierarchy (Threshold $\rightarrow$ MP-Adapt $\rightarrow$ MP-Strat $\rightarrow$ CST) and HGA, a gated allocation rule that selects global or stratified allocation based on warm-up comparability evidence.

\item We bound global-allocation distortion by cross-stratum discriminability dispersion and show across MBPP, MATH, and three model families that structure-aware allocation helps only when feedback is not comparable, motivating gated rather than unconditional stratification.

\end{itemize}

\section{Related Work}  
\label{sec:related}

\paragraph{Uncertainty in large language models.}
Uncertainty estimation in LLMs has been widely studied through token probabilities, logit margins, semantic consistency, and verbalized confidence \citep{guo2017calibration,kadavath2022language,kuhn2023semantic,farquhar2024detecting,huang2024uncertainty}. Classical selective prediction \citep{el2010foundations,geifman2017selective} and conformal prediction \citep{angelopoulos2021gentle,cherian2024large} offer principled frameworks for abstention with coverage guarantees. Modern instruction-tuned LLMs increasingly use uncertainty as a ranking signal for selective computation. Most existing approaches assume relatively stable uncertainty quality across inputs, enabling globally shared ranking policies. However, recent studies show that uncertainty quality can degrade substantially under distribution shift, task complexity, and instruction tuning \citep{huang2024uncertainty,tao2025revisiting,xiong2024can}, challenging the robustness of global ranking policies.

\paragraph{Resource allocation in LLM systems.}
LLM inference is increasingly framed as a resource-allocation problem. Self-consistency \citep{wang2022self}, best-of-N sampling, process-supervised verifiers, and test-time scaling \citep{cobbe2021training,snell2024scaling} allocate computation using globally shared ranking or reward signals. Related routing and tool-use systems similarly decide where costly capabilities should be invoked. Signal reliability, however, varies with input complexity and latent task difficulty \citep{manvi2024adaptive,damani2025learning,zhao2025t2}. We isolate the allocation consequence of this heterogeneity: a shared proxy can be globally useful while still being locally incomparable for budgeted decisions.

\paragraph{Execution-based verification.}
Execution-guided decoding, self-refinement, and verifier-based methods leverage runtime feedback to improve generation quality \citep{li2022competition,lightman2024let,madaan2023self,ni2023lever}. While effective for refining individual outputs, these works largely overlook the question of \emph{which} candidates to verify under limited computational budgets \citep{chen2021evaluating,jimenez2024swe}. Given that execution costs are high and highly variable across samples, principled prioritization via uncertainty signals becomes critical.

\paragraph{Budgeted verification and adaptive allocation.}
Budgeted LLM evaluation methods typically rely on uncertainty signals for global ranking or adaptive thresholding under fixed compute constraints \citep{fang2026inference,fan2026timebill}, or route queries across models to reduce cost \citep{chen2023frugalgpt,ong2024routellm}. These approaches assume uniform uncertainty quality across inputs. In practice, however, signal quality often degrades markedly on complex instances \citep{huang2024uncertainty,sharma2025assessing,tao2025revisiting}. 
Unlike prior work assuming uniform signal quality, we study the structural limitations induced by heterogeneous uncertainty through a controlled intervention hierarchy and cost-stratified verification policies. We draw connections to group-conditional selective prediction and per-group thresholding in the multicalibration literature 
\citep{hebert2018multicalibration,detommaso2024multicalibration}. Recent work has further extended these ideas to multi-group risk control with formal guarantees \citep{zhang2024fair}.

\paragraph{Constrained online decision making.}
Our setting is also closely related to constrained and budgeted bandits, bandits with knapsack-style resource constraints~\cite{badanidiyuru2018bandits, agrawal2014bandits}, and online resource allocation~\cite{badanidiyuru2014resourceful, balseiro2020dual}. These literatures primarily ask how to optimize reward or regret while respecting scarce-resource constraints. We study a complementary mechanism-level failure mode: the proxy used to allocate the resource can have systematically different reliability across observable cost regimes, so a globally shared learner may optimize a structurally conflated objective. MP-Adapt is therefore not presented as a new bandit algorithm; it is an optimization-focused control used to test whether stronger global adaptation resolves this misspecification.

\section{Problem Setup and Methods}
\label{sec:method}

Our goal is to diagnose when an allocation policy fails because its feedback structure is misaligned with the operating regimes it serves. We therefore express all policies within the same cost-aware decision framework and vary only the degree of sharing across strata. This design makes the comparison a mechanism test: if a weak structure-aligned policy outperforms a stronger globally shared learner, the bottleneck is not merely optimizer capacity.

\subsection{Problem Formulation}

Consider a set of $N$ samples. Each sample $i$ is associated with a binary error label $y_i \in \{0,1\}$, a proxy signal vector $\mathbf{h}_i \in \mathbb{R}^d$, and a verification cost $c_i > 0$ corresponding to executing slow feedback (e.g., unit tests).

At each step $t$, the policy observes proxy signal $\mathbf{h}_t$ and observable cost $\hat{c}_t$, and decides whether to trigger verification $a_t \in \{0,1\}$. The system is subject to a budget constraint:
\[
\sum_{t=1}^N a_t \cdot c_t \leq \beta \sum_{t=1}^N c_t,
\]
where $\beta \in (0,1)$ is the fraction of full-verification cost available. Policies obey the same budget cap and are evaluated by hit rate:
\[
\mathrm{HitRate} = \frac{\sum_t a_t \cdot y_t}{\sum_t a_t}.
\]
Because this ratio can favor under-utilization, we also report audit rates (Section~\ref{sec:cst_results}) and analyze discovered error mass through $V(\pi)$ (Section~\ref{sec:theory}).

\subsection{Unified Decision Framework and Proxy Signals}

All policies are expressed via a cost-aware scoring rule:
\begin{equation}
a_t = \mathbf{1}\!\left(s_\theta(\mathbf{h}_t) > \lambda \cdot \hat{c}_t\right),
\label{eq:unified}
\end{equation}
where $s_\theta(\cdot)$ maps proxy signals to scores and $\lambda$ controls budget pressure. Policies differ along two axes: whether the score is learned or fixed, and whether allocation is global or stratified.

All policies operate on the same token-level uncertainty proxies. Let $m_\ell = \log P(x_\ell^{(1)}) - \log P(x_\ell^{(2)})$ denote the logit margin at position $\ell$; we extract three complementary signals:
\begin{align*}
h_t^{(1)} &= -\frac{1}{L}\sum_{\ell=1}^L \log P(x_\ell \mid x_{<\ell}), \\
h_t^{(2)} &= -\min_\ell m_\ell, \\
h_t^{(3)} &= \frac{1}{L}\sum_{\ell=1}^L \exp\left(-\mathrm{clip}(m_\ell, -5, +\infty)\right).
\end{align*}
These capture average uncertainty ($h_t^{(1)}$), worst-case token confidence ($h_t^{(2)}$), and margin dispersion ($h_t^{(3)}$). The composite score is $s_\theta(\mathbf{h}_t) = \boldsymbol{\theta}^\top \mathbf{h}_t$, where $\mathbf{h}_t = [h_t^{(1)}, h_t^{(2)}, h_t^{(3)}]$.

\subsection{Global Policies: The Limits of Shared Adaptation}
\label{sec:global_policies}
We begin with globally shared policies, to isolate whether underperformance stems from insufficient learning or structural limitations of global adaptation.

\paragraph{Threshold.}
Threshold is the static global reference: it uses a fixed scoring function $s(\mathbf{h}_t) = \mathbf{w}_0^\top \mathbf{h}_t$ with a running quantile over signal-cost ratios:
\begin{align*}
a_t &= \mathbf{1}\!\left(\frac{s(\mathbf{h}_t)}{\hat{c}_t} > \tau_t\right), \\
\tau_t &= \mathrm{Quantile}_{1-\beta}\!\left\{\frac{s(\mathbf{h}_k)}{\hat{c}_k}\right\}_{k<t}.
\end{align*}
\paragraph{MP-Adapt: Global online learning.}
MP-Adapt tests whether stronger global adaptation helps, using a mirror-prox learner~\citep{nemirovski2004prox,fang2022online} that jointly optimizes $\boldsymbol{\theta}_t$ and a dual budget multiplier $\lambda_t$.

At step $t$, utility is:
\[
u_t = \boldsymbol{\theta}_t^\top \mathbf{h}_t - \lambda_t \hat{c}_t,
\]
and verification is triggered when $u_t > 0$.

Given outcome $y_t$, the scoring parameters are updated via exponentiated gradient~\citep{kivinen1997exponentiated}: 
\[
\mathrm{EG}(\boldsymbol{\theta}, \mathbf{g}, \eta)_k =
\frac{\theta_k \exp(\eta g_k)}{\sum_j \theta_j \exp(\eta g_j)},
\]
with gradient:
\[
\nabla_\theta J_t = (y_t - \sigma(u_t)) \cdot \mathbf{h}_t,
\]
where $\sigma$ is a sigmoid surrogate used only in the gradient (the decision rule itself is deterministic).
The dual variable is updated on every step:
\[
\lambda_{t+1} = \max\!\left(0,\; \lambda_t + \eta_\lambda (a_t \cdot c_t - \beta/N)\right).
\]
It represents the strongest globally shared learner; if global sharing is the bottleneck, per-stratum control should recover performance.

\paragraph{MP-Strat: Partial stratification.}
MP-Strat tests optimization instability by maintaining independent dual variables $\lambda_k$ per cost stratum and a teacher--student EMA~\citep{he2020momentum}:
\[
\boldsymbol{\theta}^T_k \leftarrow \gamma \boldsymbol{\theta}^T_k + (1-\gamma)\boldsymbol{\theta}^S_k,
\quad \gamma = 0.9.
\]
The teacher $\boldsymbol{\theta}^T_k$ is used for decisions; the student $\boldsymbol{\theta}^S_k$ is updated online.

\subsection{Structural Partitioning Without Learning}
\label{sec:cst}
CST tests structural alignment without online learning. It partitions samples into $K=4$ strata by ex-ante cost $\hat{c}_t$ and applies independent running quantiles:
\begin{align*}
a_t &= \mathbf{1}\!\left(\frac{s(\mathbf{h}_t)}{\hat{c}_t} > \tau_{k(t),t}\right), \\
\tau_{k,t} &= \mathrm{Quantile}_{1-\beta}\!\left\{
\frac{s(\mathbf{h}_j)}{\hat{c}_j} \mid k(j)=k
\right\}_{j<t}.
\end{align*}

Here $k(t)$ is the cost stratum index. CST uses Threshold's fixed score and the same online information stream---no additional labels or gradient updates. Its only difference from Threshold is structural, so their gap isolates stratification.

\subsection{Heterogeneity-Gated Allocation}
\label{sec:hga}
HGA turns the diagnostic into a policy-selection rule rather than a post-hoc oracle over methods. On a warm-up split, it estimates stratum-level signal quality $\rho_k=\mathrm{corr}(s,y\mid k)$, base error rates, and the validation hit rates of Threshold and CST. The gate opens only when CST is not worse on warm-up feedback and the cost partition exhibits both discriminability dispersion and base-rate alignment:
\[
\mathrm{Var}_k(\rho_k)\geq \epsilon_H,\qquad
\max_k \hat{p}_k-\min_k \hat{p}_k\geq \epsilon_p .
\]
If the gate opens, HGA deploys CST; otherwise it deploys the global Threshold policy. HGA uses the same score initialization and budget accounting as the candidate policies; only the deployment choice is gated by warm-up comparability evidence. Thus HGA is not a stronger optimizer than CST. It is a guard against blind stratification: the same diagnostic that identifies comparability failure also decides whether structural specialization should be used.

\begin{table*}[t]
\centering
\caption{Main results: mean hit rate across 10 seeds (heterogeneous costs). Bold: best non-Oracle per row. The rightmost column reports paired mean CST$-$Threshold gain with 95\% CI in brackets.}
\label{tab:main}
\begin{tabular*}{\textwidth}{@{\extracolsep{\fill}}llcccccccc}
\toprule
Setting & $\beta$ & Rand & Thr & CST & C+M & MP-A & MP-S & Oracle & $\Delta$(CST$-$Thr) [95\% CI] \\
\midrule
\multirow{4}{*}{\shortstack[l]{MBPP\\Qwen3\\err=.558\\$\rho$=.178}}
 & 10\% & .551 & .636 & \textbf{.671} & .622 & .597 & .656 & 1.00 & $+3.4$ $[-0.2,+7.0]$ \\
 & 20\% & .546 & .610 & \textbf{.661} & .647 & .619 & .634 & 1.00 & $+5.1$ $[+3.6,+6.6]$ \\
 & 30\% & .550 & .605 & \textbf{.661} & .638 & .616 & .599 & 1.00 & $+5.6$ $[+4.3,+7.0]$ \\
 & 50\% & .562 & .588 & \textbf{.631} & .615 & .569 & .600 & 1.00 & $+4.3$ $[+3.3,+5.3]$ \\
\midrule
\multirow{4}{*}{\shortstack[l]{MBPP\\LLaMA3\\err=.644\\$\rho$=.301}}
 & 10\% & .630 & .712 & \textbf{.882} & .844 & .689 & .837 & 1.00 & $+17.0$ $[+12.8,+21.1]$ \\
 & 20\% & .638 & .686 & \textbf{.837} & .795 & .693 & .794 & 1.00 & $+15.1$ $[+13.2,+17.0]$ \\
 & 30\% & .642 & .693 & \textbf{.803} & .755 & .692 & .763 & 1.00 & $+11.0$ $[+9.2,+12.9]$ \\
 & 50\% & .649 & .675 & \textbf{.728} & .693 & .673 & .704 & 1.00 & $+5.3$ $[+4.1,+6.5]$ \\
\midrule
\multirow{4}{*}{\shortstack[l]{MBPP\\GPT-4o-mini\\err=.528\\$\rho$=.174}}
 & 10\% & .513 & .511 & \textbf{.592} & .592 & .524 & .552 & .99 & $+8.1$ $[+4.5,+11.7]$ \\
 & 20\% & .527 & .545 & .610 & \textbf{.610} & .553 & .579 & .99 & $+6.5$ $[+4.3,+8.7]$ \\
 & 30\% & .533 & .547 & .614 & \textbf{.620} & .563 & .578 & 1.00 & $+6.7$ $[+5.6,+7.8]$ \\
 & 50\% & .539 & .578 & \textbf{.612} & .601 & .580 & .576 & 1.00 & $+3.4$ $[+2.5,+4.3]$ \\
\midrule
\multirow{4}{*}{\shortstack[l]{MATH\\Qwen3\\err=.247\\$\rho$=.153}}
 & 10\% & .242 & \textbf{.524} & .322 & .308 & .356 & .288 & 1.00 & $-20.2$ $[-22.3,-18.1]$ \\
 & 20\% & .246 & \textbf{.384} & .308 & .303 & .310 & .285 & 1.00 & $-7.6$ $[-8.4,-6.8]$ \\
 & 30\% & .248 & \textbf{.320} & .301 & .296 & .284 & .265 & .94 & $-1.9$ $[-2.4,-1.4]$ \\
 & 50\% & .248 & .264 & \textbf{.283} & .272 & .250 & .252 & .53 & $+2.0$ $[+1.6,+2.3]$ \\
\midrule
\multirow{4}{*}{\shortstack[l]{MATH\\LLaMA3\\err=.613\\$\rho$=.201}}
 & 10\% & .605 & \textbf{.848} & .799 & .744 & .734 & .730 & 1.00 & $-4.9$ $[-6.7,-3.2]$ \\
 & 20\% & .608 & \textbf{.774} & .758 & .718 & .681 & .699 & 1.00 & $-1.6$ $[-2.4,-0.8]$ \\
 & 30\% & .615 & .719 & \textbf{.734} & .698 & .653 & .671 & 1.00 & $+1.5$ $[+1.1,+2.0]$ \\
 & 50\% & .616 & .655 & \textbf{.693} & .668 & .615 & .632 & 1.00 & $+3.7$ $[+3.4,+4.1]$ \\
\bottomrule
\end{tabular*}
\end{table*}

\section{Theoretical Analysis}
\label{sec:theory}

\subsection{Setup}

Samples are drawn from $K$ latent strata with mixture weights
$w_k$. Each stratum $k$ has signal-error correlation $\rho_k =
\mathrm{corr}(s, y \mid k)$, computed as the point-biserial correlation between continuous scores and binary error labels (significance: two-sided $t$-test), and signal variance $\sigma_k^2$.
Let $z=s/\hat{c}$ denote the cost-normalized ranking score. For the theoretical comparison, define the \emph{error-discovery value} of a policy as
\[
V(\pi)=\mathbb{E}[y\cdot\mathbf{1}\{\pi\text{ verifies the sample}\}],
\]
which measures discovered error mass under a fixed population budget. This differs from the empirical hit-rate ratio; we use it because it admits a transparent decomposition of allocation distortion. The theorem therefore predicts when global and stratified allocation can diverge in error mass; the empirical hit-rate and audit-rate analyses then test whether that divergence appears as useful auditing precision, increased coverage, or harmful over-allocation.

\begin{definition}[Global Quantile Policy]
$\pi_{\mathrm{global}}$ selects the top $\beta$-fraction of samples ranked by $s/\hat{c}$.
\end{definition}
One might expect that a sufficiently powerful global optimizer could learn to account for stratum-specific signal quality---effectively discovering the stratification from data. The following result isolates why a global quantile remains structurally sensitive to heterogeneity even with perfect knowledge of the score distribution: its allocation distortion is controlled by cross-stratum discriminability dispersion.
\begin{proposition}[Heteroskedastic Allocation-Distortion Bound]
\label{thm:gap}
Define per-stratum discriminability $\delta_k$ as the standardized separation between error and correct-sample distributions of $z$. Assume: \textbf{(A1)} the error-weighted density $g_k(z)=p_k f_k(z\mid y{=}1)$ satisfies $\sup_z g_k(z)\leq M$; \textbf{(A2)} the relevant global and stratum quantile maps are locally $\ell^{-1}$-Lipschitz, equivalently the corresponding score densities are lower bounded by $\ell>0$ on the quantile neighborhood; and \textbf{(A3)} as a concrete local model, strata share error prevalence and scale and differ through the Gaussian location-separation parameter $\delta_k$. Then:
\begin{equation}
\begin{split}
&\bigl|V(\pi_{\mathrm{strat}})-V(\pi_{\mathrm{global}})\bigr| \\
&\quad \leq \frac{2Mp}{\ell\sqrt{2\pi}}
\sum_k w_k\,|\delta_k-\bar{\delta}|,
\end{split}
\label{eq:gap}
\end{equation}
where $p$ is the shared error prevalence and $\bar{\delta}=\sum_k w_k\delta_k$. Under (A3), the bound vanishes when $\delta_k=\bar{\delta}$ for all $k$.
\end{proposition}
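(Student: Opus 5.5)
We set up the comparison so that both policies are quantile rules over the same score $z=s/\hat c$. $\pi_{\mathrm{strat}}$ verifies samples in stratum $k$ with $z>q_k$, where $q_k$ is the stratum-$k$ $(1-\beta)$-quantile. $\pi_{\mathrm{global}}$ verifies samples with $z>q$, where $q$ is the pooled $(1-\beta)$-quantile. Writing $G_k(t)=\int_t^\infty g_k(z)\,dz$ for the error mass above $t$ in stratum $k$, we have $V(\pi_{\mathrm{strat}})=\sum_k w_k G_k(q_k)$ and $V(\pi_{\mathrm{global}})=\sum_k w_k G_k(q)$. Hence
\[
\bigl|V(\pi_{\mathrm{strat}})-V(\pi_{\mathrm{global}})\bigr|\le \sum_k w_k\,|G_k(q_k)-G_k(q)|\le M\sum_k w_k\,|q_k-q|,
\]
where the last step uses (A1) through the mean value theorem on $G_k$, whose derivative is bounded by $M$ in absolute value. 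The whole problem therefore reduces to bounding the threshold displacement $|q_k-q|$ stratum by stratum.

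For the displacement we use (A3). Each stratum's score law has the form $f_k(z)=(1-p)\,\phi_\sigma(z-\mu_0)+p\,\phi_\sigma(z-\mu_0-\sigma\delta_k)$, with shared prevalence $p$ and shared scale $\sigma$ (normalized to one). The stratum CDF $F_k$ then depends on $k$ only through $\delta_k$, and $\partial F_k(t)/\partial\delta_k=p\,\phi(t-\mu_0-\delta_k)$, which is at most $p/\sqrt{2\pi}$ in absolute value. This derivative bound is where the factor $p/\sqrt{2\pi}$ in \eqref{eq:gap} comes from.

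We bring in a reference distribution $\bar F$ with separation $\bar\delta$ and quantile $\bar q$, and split $|q_k-q|\le|q_k-\bar q|+|\bar q-q|$.

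\textbf{First term.} Since $F_k(q_k)=\bar F(\bar q)=1-\beta$, the (A2) lower density bound $\ell$ on the quantile neighborhood gives $|q_k-\bar q|\le \ell^{-1}|F_k(\bar q)-\bar F(\bar q)|\le \frac{p}{\ell\sqrt{2\pi}}|\delta_k-\bar\delta|$.

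\textbf{Second term.} The pooled CDF is $F=\sum_k w_kF_k$ and $F(q)=1-\beta$. Applying the same argument to the pooled map gives $|q-\bar q|\le \ell^{-1}|F(\bar q)-\bar F(\bar q)|\le \ell^{-1}\sum_j w_j|F_j(\bar q)-\bar F(\bar q)|\le \frac{p}{\ell\sqrt{2\pi}}\sum_j w_j|\delta_j-\bar\delta|$.

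Averaging over $k$ with weights $w_k$, each term contributes at most $\frac{p}{\ell\sqrt{2\pi}}\sum_k w_k|\delta_k-\bar\delta|$. The two together give the factor $2$, and multiplying by $M$ yields \eqref{eq:gap}.

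For the equality case, if $\delta_k=\bar\delta$ for all $k$, then under (A3) all strata share one law. Then $q_k=q$ for every $k$ and the bound is zero, which is consistent with the right-hand side vanishing.

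The main obstacle is justifying the inverse-Lipschitz steps. (A2) controls quantile maps only locally, so we must check that $\bar q$, $q$ and the $q_k$ all lie in the neighborhood on which the density exceeds $\ell$. One option is to read (A2) as holding on an interval containing all these quantiles. Alternatively, we can argue by a continuity path $\delta_k(s)=\bar\delta+s(\delta_k-\bar\delta)$, $s\in[0,1]$, and integrate the derivative of the quantile along the path, which keeps every point inside the region where (A2) applies. A secondary subtlety is the cost normalization $z=s/\hat c$: if $\hat c$ varies within a stratum, (A3) has to be read as a statement about the law of $z$ directly. We adopt that reading so that $\delta_k$ and the Gaussian derivative bound apply to $z$.
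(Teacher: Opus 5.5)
Your proof is correct, but it takes a somewhat different route from the paper's. The two share a skeleton: the stratum-wise decomposition of $V$, the $M|\tau-\tau_k|$ error-mass bound from (A1), the inverse-Lipschitz quantile perturbation from (A2), and the $p/\sqrt{2\pi}$ Lipschitz constant of the (A3) mixture CDF in $\delta$. The difference is where the triangle inequality goes.

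The paper compares each $F_k$ directly with the pooled $F=\sum_j w_jF_j$, obtaining $|\tau-\tau_k|\le\frac{p}{\ell\sqrt{2\pi}}\sum_j w_j|\delta_k-\delta_j|$. Its intermediate bound is therefore the Gini-type mean difference $\sum_{k,j}w_kw_j|\delta_k-\delta_j|$. The factor $2$ enters only at the end, through $|\delta_k-\delta_j|\le|\delta_k-\bar\delta|+|\delta_j-\bar\delta|$. You instead pivot through the auxiliary law $\bar F$ at separation $\bar\delta$ and split in quantile space.

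Each route buys something. The paper's version needs no auxiliary law, and its intermediate bound lies between $\sum_k w_k|\delta_k-\bar\delta|$ and twice it, so it is never weaker than the stated result. Yours isolates the pooled-versus-reference shift $|q-\bar q|$, where the first-order terms cancel because $\sum_j w_j(\delta_j-\bar\delta)=0$. That term is therefore second order in the dispersion, and the constant $2$ could be sharpened to $1$ up to an $O\bigl(\sum_j w_j(\delta_j-\bar\delta)^2\bigr)$ correction.

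Your localization concern closes without the path argument. Under (A3), $\partial_{\delta_k}F_k(t)=-p\,\phi(t-\mu_0-\delta_k)<0$; your sign is flipped, harmlessly, since only the absolute value is used. So $q_k$ is increasing in $\delta_k$, which gives $\bar q\in[\min_k q_k,\max_k q_k]$. Also $F(\min_k q_k)\le 1-\beta\le F(\max_k q_k)$, which places $q$ in the same interval. (A2) on this interval is exactly what the paper's Step 2 tacitly uses.
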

\begin{figure*}[t]
  \centering
  \includegraphics[width=\textwidth]{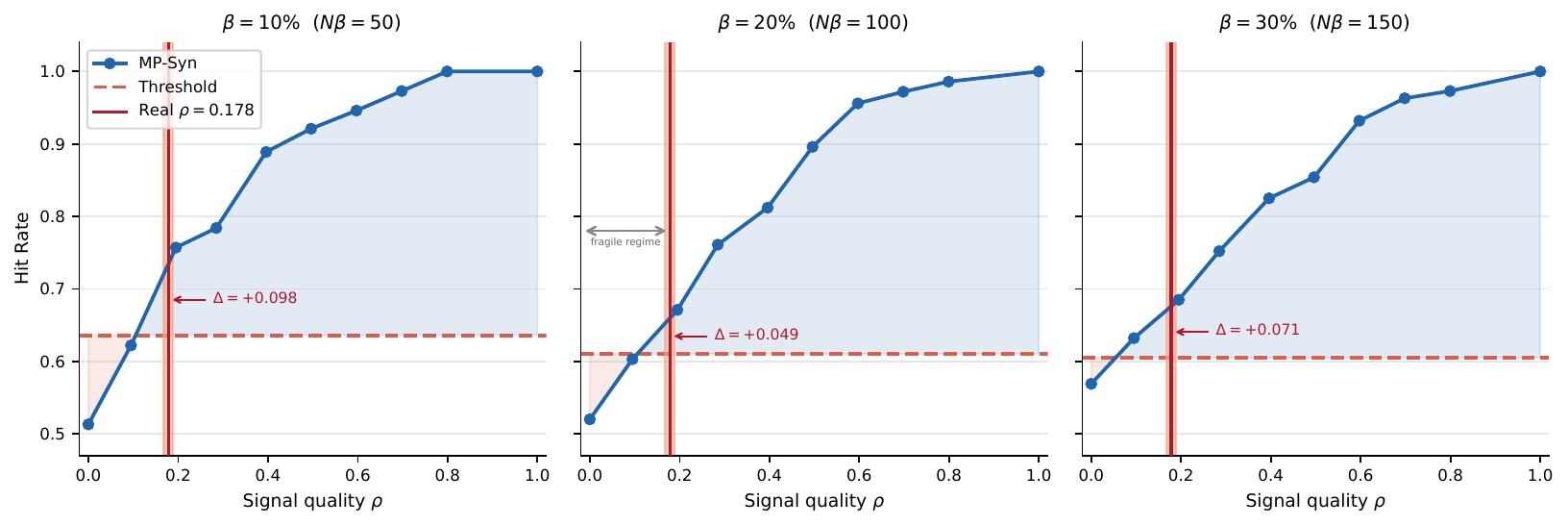}
  \caption{Signal strength sweep under homogeneous synthetic signals ($h^{\text{syn}} = \rho\tilde{y} + \sqrt{1{-}\rho^2}\,\varepsilon$, uniform $\rho_k \equiv \rho$). MP-Syn (blue curve) vs.\ Threshold (red dashed); blue diamond: homogeneous MP-Syn at $\rho{=}0.178$; red cross: actual MP-Adapt. The gap indicates efficiency loss from heterogeneous signal structures.}
  \label{fig:sweep}
\end{figure*}

(A3)'s homogeneous prevalence isolates discriminability heterogeneity from base-rate heterogeneity, analyzed separately as Source~B below.

Under the equal-variance location model, $\delta_k$ is monotonically related to the point-biserial correlation $\rho_k = \mathrm{corr}(s, y \mid k)$; for weak separation, $\delta_k \approx \rho_k / \sqrt{p_k(1-p_k)}$. Thus cross-stratum variation in $\rho_k$ provides an observable, local diagnostic for the dispersion term. The supplementary proof derives the explicit constant by composing a quantile-perturbation bound with the bounded error-mass density in (A1).

Proposition~\ref{thm:gap} characterizes allocation distortion induced by heterogeneous discriminability (\textbf{Source A}). The bound is diagnostic rather than a universal lower bound on CST's gain: it controls how far global and stratified allocation values can diverge under a transparent local model, but does not determine the sign of the difference or directly bound the empirical hit-rate ratio. This is distinct from a second empirical mechanism: CST also reallocates budget toward high-error strata whose uncertainty signals are weak but whose base error rates are substantially elevated (\textbf{Source B}). For example, in Q4 of MBPP Qwen3, $\rho_{\mathrm{Q4}}{=}0.157$ (n.s.) yet the error rate is 0.714; CST increases audit coverage from 0.151 to 0.191, trading precision for recall in error-dense regions. Together, the proposition and this coverage analysis explain why structural partitioning helps in strongly heterogeneous settings while correctly allowing negative gains when heterogeneity is weak or poorly aligned.

\textbf{Cost as a signal-quality proxy.} Per-stratum $\rho_k$ values (Figure~\ref{fig:quartile_rho}) show that ex-ante cost identifies heterogeneity regimes across all three models. Despite opposite directional patterns---Qwen3 and GPT-4o-mini degrade in Q4 while LLaMA3 rises monotonically---CST improves over Threshold in all three MBPP cases.

\section{Experiments}
\label{sec:experiment}

\subsection{Setup}

\paragraph{Datasets and Models.}
We evaluate Qwen3-8B \citep{qwen3}, LLaMA3-8B \citep{grattafiori2024llama3herdmodels}, and GPT-4o-mini \citep{openai2024gpt4ocard} on MBPP \citep{austin2021program} ($N{=}500$; unit-test feedback) and MATH \citep{hendrycksmath2021} ($N{=}5000$; answer matching). Error rates are 55.8\%/24.7\% for Qwen3 on MBPP/MATH, 64.4\%/61.3\% for LLaMA3, and 52.8\% for GPT-4o-mini on MBPP.

\paragraph{Costs, Noise, and Baselines.}
Decisions use an input-length-normalized proxy cost $\hat{c}_t$, while budget consumption uses output-length-normalized true cost $c_t$ (correlation $\approx0.3$). We add 10\% Bernoulli label noise and Gaussian proxy noise ($\sigma{=}0.1$). The diagnostic chain is Random, Threshold, MP-Adapt, MP-Strat, CST, and Oracle; Table~\ref{tab:main} also reports CST+Memory, with details in the supplementary material.

\paragraph{Hyperparameters.}
We use $\eta_\theta{=}0.5$ with decay $0.002$, EMA momentum $\gamma{=}0.9$, a $\lambda_k$ floor of $0.1$, and budget-scaled $\eta_\lambda$. Threshold and CST fix $\mathbf{w}_0$ from the first $T_0{=}50$ samples. HGA keeps this same score initialization and uses a separate 50\% calibration split for the gate, with $\epsilon_H{=}0.002$, $\epsilon_p{=}0.03$, and nonnegative warm-up gain required before opening the CST gate. All policies use the same stream order, proxy signals, noisy labels, and budget accounting; we report Student-$t$ 95\% CIs across 10 seeds.

\subsection{Main Result: A Hierarchy of Bottlenecks}
\label{sec:main_obs}

Table~\ref{tab:main} tests whether failure of global signal comparability, rather than insufficient adaptation, explains global-policy underperformance. If so, the deliberately weaker but structurally aligned CST should outperform both global policies (full results in the supplementary material).

MP-Adapt fails to reliably outperform Threshold, while MP-Strat's partial recovery shows that optimization instability is present but does not explain the full gap. CST performs best when cross-stratum heterogeneity is strong: less learning yields better allocation once the decision structure matches the feedback regime.

We isolate three potential sources of gain---signal heteroskedasticity, cost heterogeneity, and optimization noise. Homogeneous-cost controls in the supplementary material rule out cost heterogeneity alone, while MP-Strat shows that optimization noise is insufficient to explain the observed differences; together, these controls identify cross-stratum signal heteroskedasticity as the most consistent explanation among the tested mechanisms.

Across MBPP settings, CST improves over Threshold with CIs excluding zero (Table~\ref{tab:main}). At $\beta{=}20\%$, the gain reaches $+5.1$pp (95\% CI $[+3.6, +6.6]$) for Qwen3, $+15.1$pp $[+13.2, +17.0]$ for LLaMA3, and $+6.5$pp $[+4.3, +8.7]$ for GPT-4o-mini. In contrast, Qwen3-MATH exhibits a negative CST gain ($-7.6$pp at $\beta{=}20\%$, 95\% CI $[-8.4, -6.8]$). This conditional behavior is consistent with our diagnosis: stratification is useful only when the chosen partition aligns with consequential heterogeneity.

In a calibration reanalysis of the gate, HGA stays close to CST in the MBPP settings where cost-aligned heterogeneity is useful, but recovers Threshold on Qwen3-MATH, where fixed stratification underperforms. At $\beta{=}20\%$, HGA obtains .653/.804/.591 on MBPP Qwen3/LLaMA3/GPT-4o-mini and .385/.774 on MATH Qwen3/LLaMA3, respectively (full per-budget results in the supplementary material). This is the intended role of the gate: not to dominate the best fixed policy in every setting, but to avoid committing to stratification when comparability evidence is weak.

Figure~\ref{fig:hga_gain} summarizes this boundary condition across budgets.
\begin{figure*}[t]
  \centering
  \includegraphics[width=\textwidth]{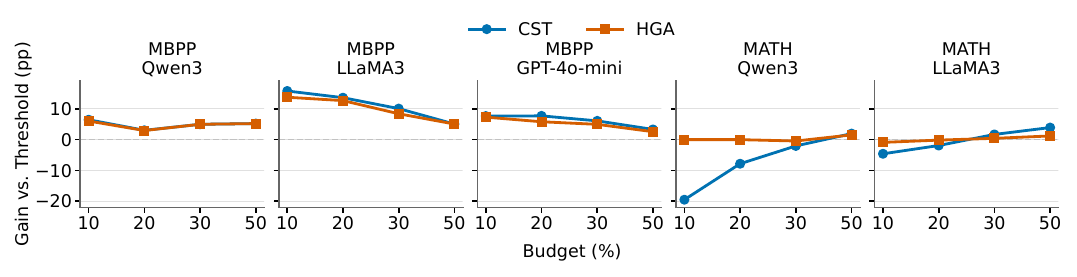}
  \caption{CST and HGA gain over Threshold ($\Delta$ hit rate, pp) across budgets $\beta$ in the calibration reanalysis.}
  \label{fig:hga_gain}
\end{figure*}

\begin{figure}[t]
  \centering
  \includegraphics[width=\columnwidth]{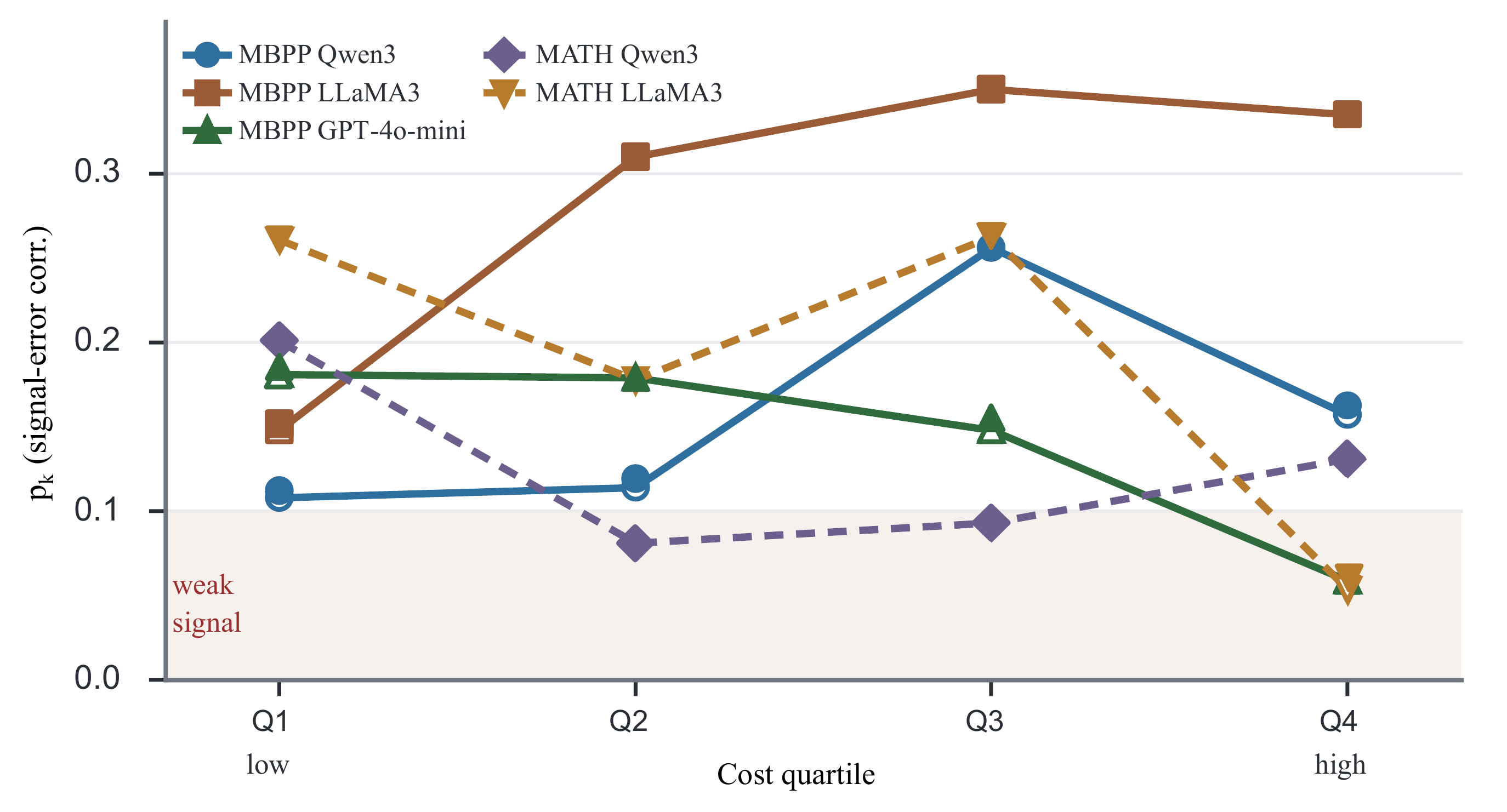}
  \caption{Per-stratum $\rho_k$ across cost quartiles. Filled markers indicate $p{<}.05$ and open markers indicate n.s.; shaded region marks near-random signal ($\rho_k{<}0.10$).}
\label{fig:quartile_rho}
\end{figure}

\subsection{Signal Quality and Heteroskedasticity}
\label{sec:signal_sweep}
We now diagnose the bottleneck via three hypotheses: weak signal, optimization instability, or cross-stratum heterogeneity.
\paragraph{Signal strength analysis.}
We construct synthetic signals:
\[
h^{\mathrm{syn}} = \rho \cdot \tilde{y} + \sqrt{1-\rho^2} \cdot \varepsilon, \quad \varepsilon \sim \mathcal{N}(0,1).
\]

Figure~\ref{fig:sweep} shows MP-Syn improves over Threshold when $\rho \gtrsim 0.20$. At the observed $\rho{=}0.178$, MP-Syn remains above Threshold under homogeneous assumptions, while the actual MP-Adapt result is below---indicating additional inefficiency from heterogeneous signal structures.

\paragraph{Within-stratum heteroskedasticity.}
MBPP Qwen3 exhibits substantial per-stratum variation: Q4 combines the highest error rate (0.714) with weak signal ($\rho{=}0.157$, n.s.), while only Q3 has significant discriminability ($\rho{=}0.256$, $p{<}.01$).

Figure~\ref{fig:quartile_rho} extends this analysis across all settings. MBPP models exhibit strong heterogeneity with distinct per-model patterns; MATH exhibits weaker and less structured variation, consistent with its smaller CST gains (Proposition~\ref{thm:gap}). Because MP-Adapt aggregates gradients globally, low-quality strata dilute informative gradients from high-signal regions.

\paragraph{Hypothesis tests.} Three ablations support this diagnosis. (1)~\textit{Partial feedback}: restricting updates to verified samples changes performance by $<0.5$pp, making feedback sparsity an unlikely primary explanation. (2)~\textit{Oracle signal}: replacing proxy signals with true labels yields perfect hit rate, showing that the optimization framework is sufficient given reliable signals. (3)~\textit{MP-Strat}: stabilization partially recovers ($+2.8$pp), but the remaining gap to CST ($-1.9$pp) points to structural heterogeneity as the largest remaining limitation among the tested hypotheses.

\paragraph{Diagnosing comparability failure.}
Optimizer weakness predicts that stronger global learning closes the stratification gap. Comparability failure instead predicts varying stratum-level signal quality, partial MP-Strat recovery, and a weak aligned intervention outperforming global adaptation. Negative-control partitions and Qwen3-MATH support this distinction: structural alignment helps only when the partition captures consequential feedback regimes.

\subsection{A Weak Structural Intervention}
\label{sec:cst_results}
Having diagnosed comparability failure, we use CST as a deliberately weak intervention to examine how structural alignment changes allocation.
Table~\ref{tab:cst_quartile} illustrates two complementary sources of CST's improvement.

\begin{table}[t]
\centering
\small
\caption{CST vs.\ Threshold per-stratum behavior ($\beta=20\%$).}
\label{tab:cst_quartile}
\begin{tabular}{lcccccc}
\toprule
 & \multicolumn{2}{c}{Audit rate} & \multicolumn{2}{c}{Hit rate} & Error \\
\cmidrule(lr){2-3}\cmidrule(lr){4-5}
Stratum & Thr & CST & Thr & CST & rate \\
\midrule
Q1 & .306 & .189 & .412 & .429 & .423 \\
Q2 & .216 & .209 & .586 & .571 & .500 \\
Q3 & .186 & .217 & .625 & .714 & .581 \\
Q4 & .151 & .191 & .947 & .833 & .714 \\
\bottomrule
\end{tabular}
\end{table}

\textbf{Source 1 (precision gain in high-signal strata):} In Q3 ($\rho=0.256$, $p{<}.01$), CST increases audit rate from 0.186 to 0.217 while hit rate rises from 0.625 to 0.714 ($+8.9$pp).

\textbf{Source 2 (coverage gain in low-signal strata):} In Q4 ($\rho=0.157$, n.s., error rate 0.714), CST increases audit rate from 0.151 to 0.191 ($+26.5\%$), trading precision for recall: hit rate drops from 0.947 to 0.833, but more total errors are found. Despite near-random signals, the high base error rate makes increased coverage effective.

Negative-control stratification results in the supplementary material show that cost is the only tested ex-ante criterion that improves over global Threshold; random partitions and noisy signal-strength bins degrade performance. Per-stratum split-conformal calibration further confirms that stratification, rather than CST-specific optimization, drives most gains.

\subsection{Cross-Task and Cross-Model Analysis}
\label{sec:cross_model}
Finally, we characterize the boundary conditions for structural partitioning.
Figure~\ref{fig:hga_gain} shows that MBPP gains are consistently positive, whereas MATH is conditional on model and budget. HGA follows these boundary conditions rather than treating stratification as universally beneficial; full gate decisions appear in the supplementary material. Table~\ref{tab:main} reveals three cross-task patterns.

This evaluation frames HGA as policy selection rather than post-hoc model choice: replace a global comparator with stratum-local comparators only when warm-up feedback supports specialization.

\textbf{(1) CST gains depend on both cost-error correlation and heteroskedasticity.}
LLaMA3-MATH ($r=0.309$) shows CST gains at $\beta\geq30\%$, while Qwen3-MATH ($r=0.193$) underperforms. GPT-4o-mini-MBPP achieves higher gains despite lower global correlation due to stronger signal collapse in high-cost strata.

\textbf{(2) Optimization stabilization has conditional benefits.} MP-Strat improves over MP-Adapt only in moderate-to-high correlation regimes; in low-correlation settings (e.g., Qwen3-MATH), stabilization can reduce performance.

\textbf{(3) Heteroskedasticity is structural, not model-specific.} Across MBPP, signal degradation occurs in opposite directions yet CST consistently improves. A heterogeneity index $H{=}\mathrm{Var}_k(\rho_k)$ captures this: Qwen3 ($H{=}0.0035$) vs.\ LLaMA3 ($H{=}0.0065$) yields $+5.1$ vs.\ $+15.1$pp; GPT-4o-mini ($H{=}0.0025$) achieves $+6.5$pp via extreme Q4 signal collapse.

\paragraph{A diagnostic protocol for allocation systems.}
Before increasing optimizer complexity, practitioners should measure conditional signal quality across observable strata and compare an aligned partition against both the global policy and negative-control partitions. Improvement only under the aligned partition indicates comparability failure; HGA operationalizes this test by gating stratification on warm-up evidence.

\section{Conclusion}
\label{sec:conclusion}
The intervention hierarchy separates weak signals, optimization instability, and structural heterogeneity: MP-Strat partially recovers performance, while CST exposes when structural alignment matters. HGA turns this diagnosis into a guarded policy choice. The broader principle is to test feedback comparability before trusting stronger global optimization: a failed adaptive allocator may indicate that equal proxy scores have different decision value across operating regimes.

\bibliography{custom}

@inproceedings{agrawal2014bandits,
  title={Bandits with concave rewards and convex knapsacks},
  author={Agrawal, Shipra and Devanur, Nikhil R},
  booktitle={Proceedings of the fifteenth ACM conference on Economics and computation},
  pages={989--1006},
  year={2014}
}

@inproceedings{badanidiyuru2014resourceful,
  title={Resourceful contextual bandits},
  author={Badanidiyuru, Ashwinkumar and Langford, John and Slivkins, Aleksandrs},
  booktitle={Conference on Learning Theory},
  pages={1109--1134},
  year={2014},
  organization={PMLR}
}

@article{badanidiyuru2018bandits,
  title={Bandits with knapsacks},
  author={Badanidiyuru, Ashwinkumar and Kleinberg, Robert and Slivkins, Aleksandrs},
  journal={Journal of the ACM (JACM)},
  volume={65},
  number={3},
  pages={1--55},
  year={2018},
  publisher={ACM New York, NY, USA}
}

@inproceedings{balseiro2020dual,
  title={Dual mirror descent for online allocation problems},
  author={Balseiro, Santiago and Lu, Haihao and Mirrokni, Vahab},
  booktitle={International Conference on Machine Learning},
  pages={613--628},
  year={2020},
  organization={PMLR}
}

@article{geifman2017selective,
  title={Selective classification for deep neural networks},
  author={Geifman, Yonatan and El-Yaniv, Ran},
  journal={Advances in neural information processing systems},
  volume={30},
  year={2017}
}

@inproceedings{guo2017calibration,
  title={On calibration of modern neural networks},
  author={Guo, Chuan and Pleiss, Geoff and Sun, Yu and Weinberger, Kilian Q},
  booktitle={International conference on machine learning},
  pages={1321--1330},
  year={2017},
  organization={PMLR}
}

@article{kadavath2022language,
  title={Language models (mostly) know what they know},
  author={Kadavath, Saurav and Conerly, Tom and Askell, Amanda and Henighan, Tom and Drain, Dawn and Perez, Ethan and Schiefer, Nicholas and Hatfield-Dodds, Zac and DasSarma, Nova and Tran-Johnson, Eli and others},
  journal={arXiv preprint arXiv:2207.05221},
  year={2022}
}

@article{el2010foundations,
  title={On the Foundations of Noise-free Selective Classification.},
  author={El-Yaniv, Ran and others},
  journal={Journal of Machine Learning Research},
  volume={11},
  number={5},
  year={2010}
}

@article{kuhn2023semantic,
  title={Semantic uncertainty: Linguistic invariances for uncertainty estimation in natural language generation},
  author={Kuhn, Lorenz and Gal, Yarin and Farquhar, Sebastian},
  journal={arXiv preprint arXiv:2302.09664},
  year={2023}
}

@article{wang2022self,
  title={Self-consistency improves chain of thought reasoning in language models},
  author={Wang, Xuezhi and Wei, Jason and Schuurmans, Dale and Le, Quoc and Chi, Ed and Narang, Sharan and Chowdhery, Aakanksha and Zhou, Denny},
  journal={arXiv preprint arXiv:2203.11171},
  year={2022}
}

@article{snell2024scaling,
  title={Scaling llm test-time compute optimally can be more effective than scaling model parameters},
  author={Snell, Charlie and Lee, Jaehoon and Xu, Kelvin and Kumar, Aviral},
  journal={arXiv preprint arXiv:2408.03314},
  year={2024}
}

@article{austin2021program,
  title={Program synthesis with large language models},
  author={Austin, Jacob and Odena, Augustus and Nye, Maxwell and Bosma, Maarten and Michalewski, Henryk and Dohan, David and Jiang, Ellen and Cai, Carrie and Terry, Michael and Le, Quoc and others},
  journal={arXiv preprint arXiv:2108.07732},
  year={2021}
}

@article{chen2021evaluating,
  title={Evaluating large language models trained on code},
  author={Chen, Mark and Tworek, Jerry and Jun, Heewoo and Yuan, Qiming and Pinto, Henrique Ponde De Oliveira and Kaplan, Jared and Edwards, Harri and Burda, Yuri and Joseph, Nicholas and Brockman, Greg and others},
  journal={arXiv preprint arXiv:2107.03374},
  year={2021}
}

@inproceedings{jimenez2024swe,
  title={Swe-bench: Can language models resolve real-world github issues?},
  author={Jimenez, Carlos E and Yang, John and Wettig, Alexander and Yao, Shunyu and Pei, Kexin and Press, Ofir and Narasimhan, Karthik},
  booktitle={International Conference on Learning Representations},
  volume={2024},
  pages={54107--54157},
  year={2024}
}

@article{li2022competition,
  title={Competition-level code generation with alphacode},
  author={Li, Yujia and Choi, David and Chung, Junyoung and Kushman, Nate and Schrittwieser, Julian and Leblond, R{\'e}mi and Eccles, Tom and Keeling, James and Gimeno, Felix and Dal Lago, Agustin and others},
  journal={Science},
  volume={378},
  number={6624},
  pages={1092--1097},
  year={2022},
  publisher={American Association for the Advancement of Science}
}

@article{madaan2023self,
  title={Self-refine: Iterative refinement with self-feedback},
  author={Madaan, Aman and Tandon, Niket and Gupta, Prakhar and Hallinan, Skyler and Gao, Luyu and Wiegreffe, Sarah and Alon, Uri and Dziri, Nouha and Prabhumoye, Shrimai and Yang, Yiming and others},
  journal={Advances in neural information processing systems},
  volume={36},
  pages={46534--46594},
  year={2023}
}

@article{hendrycksmath2021,
  title={Measuring Mathematical Problem Solving With the MATH Dataset},
  author={Dan Hendrycks and Collin Burns and Saurav Kadavath and Akul Arora and Steven Basart and Eric Tang and Dawn Song and Jacob Steinhardt},
  journal={NeurIPS},
  year={2021}
}

@inproceedings{huang2024uncertainty,
  title={Uncertainty in language models: Assessment through rank-calibration},
  author={Huang, Xinmeng and Li, Shuo and Yu, Mengxin and Sesia, Matteo and Hassani, Hamed and Lee, Insup and Bastani, Osbert and Dobriban, Edgar},
  booktitle={Proceedings of the 2024 Conference on Empirical Methods in Natural Language Processing},
  pages={284--312},
  year={2024}
}

@article{tao2025revisiting,
  title={Revisiting uncertainty estimation and calibration of large language models},
  author={Tao, Linwei and Yeh, Yi-Fan and Dong, Minjing and Huang, Tao and Torr, Philip and Xu, Chang},
  journal={arXiv preprint arXiv:2505.23854},
  year={2025}
}

@article{sharma2025assessing,
  title={Assessing correctness in llm-based code generation via uncertainty estimation},
  author={Sharma, Arindam and David, Cristina},
  journal={arXiv preprint arXiv:2502.11620},
  year={2025}
}

@inproceedings{xiong2024can,
  title={Can llms express their uncertainty? an empirical evaluation of confidence elicitation in llms},
  author={Xiong, Miao and Hu, Zhiyuan and Lu, Xinyang and Li, Yifei and Fu, Jie and He, Junxian and Hooi, Bryan},
  booktitle={International Conference on Learning Representations},
  volume={2024},
  pages={23650--23678},
  year={2024}
}

@article{manvi2024adaptive,
  title={Adaptive inference-time compute: Llms can predict if they can do better, even mid-generation},
  author={Manvi, Rohin and Singh, Anikait and Ermon, Stefano},
  journal={arXiv preprint arXiv:2410.02725},
  year={2024}
}

@inproceedings{zhao2025t2,
  title={T2: An Adaptive Test-Time Scaling Strategy for Contextual Question Answering},
  author={Zhao, Zhengyi and Zhang, Shubo and Wang, Zezhong and Wang, Huimin and Zhao, Yutian and Liang, Bin and Zheng, Yefeng and Li, Binyang and Wong, Kam-Fai and Wu, Xian},
  booktitle={Proceedings of the 2025 Conference on Empirical Methods in Natural Language Processing},
  pages={3731--3756},
  year={2025}
}

@article{fang2026inference,
  title={Inference-Time Budget Control for LLM Search Agents},
  author={Fang, Zhengru and Hu, Senkang Forest and Chang, Zhonghao and Guo, Yu and Tao, Yihang and Liu, Hongyao and Ruan, Mengzhe and Huang, Jun and Fang, Yuguang},
  journal={arXiv preprint arXiv:2605.05701},
  year={2026}
}

@inproceedings{fan2026timebill,
  title={Timebill: Time-budgeted inference for large language models},
  author={Fan, Qi and Zou, An and Ma, Yehan},
  booktitle={Proceedings of the AAAI Conference on Artificial Intelligence},
  volume={40},
  number={36},
  pages={30620--30628},
  year={2026}
}

@article{qwen3,
    title={Qwen3 Technical Report}, 
    author={Yang, An and Li, Anfeng and Yang, Baosong and Zhang, Beichen and Hui, Binyuan and others},
    journal = {arXiv preprint arXiv:2505.09388},
    year={2025}
}

@misc{openai2024gpt4ocard,
      title={GPT-4o System Card}, 
      author={OpenAI and Hurst, Aaron and Lerer, Adam and Goucher, Adam P. and Perelman, Adam and others},
      year={2024},
      eprint={2410.21276},
      archivePrefix={arXiv},
      primaryClass={cs.CL},
      url={https://arxiv.org/abs/2410.21276}, 
}

@misc{grattafiori2024llama3herdmodels,
      title={The Llama 3 Herd of Models}, 
      author={Grattafiori, Aaron and Dubey, Abhimanyu and Jauhri, Abhinav and Pandey, Abhinav and Kadian, Abhishek and others},
      year={2024},
      eprint={2407.21783},
      archivePrefix={arXiv},
      primaryClass={cs.AI},
      url={https://arxiv.org/abs/2407.21783}, 
}

@inproceedings{hebert2018multicalibration,
  title={Multicalibration: Calibration for the (computationally-identifiable) masses},
  author={H{\'e}bert-Johnson, Ursula and Kim, Michael and Reingold, Omer and Rothblum, Guy},
  booktitle={International Conference on Machine Learning},
  pages={1939--1948},
  year={2018},
  organization={PMLR}
}

@misc{angelopoulos2021gentle,
      title={A Gentle Introduction to Conformal Prediction and Distribution-Free Uncertainty Quantification}, 
      author={Anastasios N. Angelopoulos and Stephen Bates},
      year={2022},
      eprint={2107.07511},
      archivePrefix={arXiv},
      primaryClass={cs.LG},
      url={https://arxiv.org/abs/2107.07511}, 
}

@inproceedings{damani2025learning,
  title={Learning how hard to think: Input-adaptive allocation of lm computation},
  author={Damani, Mehul and Shenfeld, Idan and Peng, Andi and Bobu, Andreea and Andreas, Jacob},
  booktitle={International Conference on Learning Representations},
  volume={2025},
  pages={102783--102802},
  year={2025}
}

@article{detommaso2024multicalibration,
  title={Multicalibration for confidence scoring in llms},
  author={Detommaso, Gianluca and Bertran, Martin and Fogliato, Riccardo and Roth, Aaron},
  journal={arXiv preprint arXiv:2404.04689},
  year={2024}
}

@article{cherian2024large,
  title={Large language model validity via enhanced conformal prediction methods},
  author={Cherian, John J and Gibbs, Isaac and Cand{\`e}s, Emmanuel J},
  journal={Advances in Neural Information Processing Systems},
  volume={37},
  pages={114812--114842},
  year={2024}
}

@article{fang2022online,
  title={Online mirror descent and dual averaging: keeping pace in the dynamic case},
  author={Fang, Huang and Harvey, Nicholas JA and Portella, Victor S and Friedlander, Michael P},
  journal={Journal of Machine Learning Research},
  volume={23},
  number={121},
  pages={1--38},
  year={2022}
}

@article{hazan2016introduction,
  title={Introduction to online convex optimization},
  author={Hazan, Elad},
  journal={Foundations and Trends in Optimization},
  volume={2},
  number={3-4},
  pages={157--325},
  year={2016},
  publisher={Emerald Publishing Limited}
}

@inproceedings{lightman2024let,
  title={Let's verify step by step},
  author={Lightman, Hunter and Kosaraju, Vineet and Burda, Yuri and Edwards, Harrison and Baker, Bowen and Lee, Teddy and Leike, Jan and Schulman, John and Sutskever, Ilya and Cobbe, Karl},
  booktitle={International Conference on Learning Representations},
  volume={2024},
  pages={39578--39601},
  year={2024}
}

@article{farquhar2024detecting,
  title={Detecting hallucinations in large language models using semantic entropy},
  author={Farquhar, Sebastian and Kossen, Jannik and Kuhn, Lorenz and Gal, Yarin},
  journal={Nature},
  volume={630},
  number={8017},
  pages={625--630},
  year={2024},
  publisher={Nature Publishing Group UK London}
}

@article{ong2024routellm,
  title={Routellm: Learning to route llms with preference data},
  author={Ong, Isaac and Almahairi, Amjad and Wu, Vincent and Chiang, Wei-Lin and Wu, Tianhao and Gonzalez, Joseph E and Kadous, M Waleed and Stoica, Ion},
  journal={arXiv preprint arXiv:2406.18665},
  year={2024}
}

@article{cobbe2021training,
  title={Training verifiers to solve math word problems},
  author={Cobbe, Karl and Kosaraju, Vineet and Bavarian, Mohammad and Chen, Mark and Jun, Heewoo and Kaiser, Lukasz and Plappert, Matthias and Tworek, Jerry and Hilton, Jacob and Nakano, Reiichiro and others},
  journal={arXiv preprint arXiv:2110.14168},
  year={2021}
}

@article{chen2023frugalgpt,
  title={Frugalgpt: How to use large language models while reducing cost and improving performance},
  author={Chen, Lingjiao and Zaharia, Matei and Zou, James},
  journal={arXiv preprint arXiv:2305.05176},
  year={2023}
}

@article{nemirovski2004prox,
  title={Prox-method with rate of convergence O (1/t) for variational inequalities with Lipschitz continuous monotone operators and smooth convex-concave saddle point problems},
  author={Nemirovski, Arkadi},
  journal={SIAM Journal on Optimization},
  volume={15},
  number={1},
  pages={229--251},
  year={2004},
  publisher={SIAM}
}

@inproceedings{he2020momentum,
  title={Momentum contrast for unsupervised visual representation learning},
  author={He, Kaiming and Fan, Haoqi and Wu, Yuxin and Xie, Saining and Girshick, Ross},
  booktitle={Proceedings of the IEEE/CVF conference on computer vision and pattern recognition},
  pages={9729--9738},
  year={2020}
}

@article{kivinen1997exponentiated,
  title={Exponentiated gradient versus gradient descent for linear predictors},
  author={Kivinen, Jyrki and Warmuth, Manfred K},
  journal={Information and computation},
  volume={132},
  number={1},
  pages={1--63},
  year={1997},
  publisher={Elsevier}
}

@inproceedings{ni2023lever,
  title={Lever: Learning to verify language-to-code generation with execution},
  author={Ni, Ansong and Iyer, Srini and Radev, Dragomir and Stoyanov, Ves and Yih, Wen-tau and Wang, Sida I and Lin, Xi Victoria},
  booktitle={Proceedings of the 40th International Conference on Machine Learning (ICML'23)},
  year={2023}
}

@article{zhang2024fair,
  title={Fair risk control: A generalized framework for calibrating multi-group fairness risks},
  author={Zhang, Lujing and Roth, Aaron and Zhang, Linjun},
  journal={arXiv preprint arXiv:2405.02225},
  year={2024}
}

\appendix
\section*{Supplementary Material}
\section{Proof of the Allocation-Distortion Proposition}
\label{sec:proofs}

\begin{proof}
Let $z=s/\hat{c}$ and let the population be partitioned into $K$ strata with weights $w_k$. Write $F_k$ for the within-stratum CDF of $z$, $F=\sum_k w_kF_k$ for the mixture CDF, and $g_k(z)=p f_k(z\mid y{=}1)$ for the error-weighted density under the shared prevalence assumption in (A3).

\textbf{Step 1: Stratum-wise decomposition.}
Using the law of total expectation,
\[
V(\pi) = \sum_{k=1}^K w_k \cdot \mathbb{E}[y \cdot \mathbf{1}(z > \tau) \mid k].
\]
Thus the allocation-value gap decomposes into per-stratum contributions.

\textbf{Step 2: Global threshold as mixture distortion.}
The global threshold $\tau$ satisfies the mixture budget constraint:
\[
\sum_k w_k \mathbb{P}(z > \tau \mid k) = \beta.
\]
Equivalently, $\tau=Q_{1-\beta}(F)$, and the stratified policy uses $\tau_k=Q_{1-\beta}(F_k)$ within each stratum. By the local inverse-Lipschitz condition in (A2), the quantile perturbation satisfies
\begin{equation}
|\tau-\tau_k| \leq \frac{1}{\ell}\sup_z|F_k(z)-F(z)|.
\label{eq:quantile-perturb}
\end{equation}

\textbf{Step 3: CDF deviation under heterogeneous discriminability.}
Under the common-prevalence, common-scale Gaussian location model in (A3), the stratum CDFs differ through $\delta_k$. Since $\|\Phi'\|_\infty=1/\sqrt{2\pi}$,
\[
\sup_z|F_k(z)-F_j(z)|
\leq \frac{p}{\sqrt{2\pi}}|\delta_k-\delta_j|.
\]
Because $F=\sum_jw_jF_j$, combining this inequality with Eq.~(\ref{eq:quantile-perturb}) gives
\[
|\tau-\tau_k|
\leq \frac{p}{\ell\sqrt{2\pi}}\sum_jw_j|\delta_k-\delta_j|.
\]

\textbf{Step 4: Composition into allocation-value gap.}
Assumption (A1) implies that changing the threshold within stratum $k$ changes discovered error mass by at most
\[
\left|\int_{\min(\tau,\tau_k)}^{\max(\tau,\tau_k)} g_k(z)\,dz\right|
\leq M|\tau-\tau_k|.
\]
Therefore,
\begin{align*}
|V(\pi_{\mathrm{strat}})-V(\pi_{\mathrm{global}})|
&\leq \sum_kw_kM|\tau-\tau_k|\\
&\leq \frac{Mp}{\ell\sqrt{2\pi}}
\sum_{k,j}w_kw_j|\delta_k-\delta_j|\\
&\leq \frac{2Mp}{\ell\sqrt{2\pi}}
\sum_kw_k|\delta_k-\bar{\delta}|,
\end{align*}
where the final step uses
$|\delta_k-\delta_j|\leq|\delta_k-\bar{\delta}|+|\delta_j-\bar{\delta}|$.
This proves the stated bound. Under (A3), equal $\delta_k$ imply identical $F_k$, so $\tau_k=\tau$ and the gap vanishes. The result is an upper bound on allocation-value divergence, not a sign guarantee for empirical hit-rate gains.
\end{proof}

\section{Full Per-Budget Results on MBPP Qwen3-8B}
\label{appendix:qwen3}

Table~\ref{tab:cst_main} reports mean per-budget results for MBPP Qwen3-8B across 10 seeds, expanding the main-paper summary. CST achieves the best non-Oracle mean hit rate at every budget. The homogeneous-cost version appears in the homogeneous-cost control section below.

\begin{table}[t]
\centering
\footnotesize
\setlength{\tabcolsep}{2.8pt}
\caption{Mean hit rate across 10 seeds, MBPP Qwen3-8B (heterogeneous costs, $N=500$).}
\label{tab:cst_main}
\begin{tabular}{lccccccc}
\toprule
$\beta$ & Rand & Thr & CST & C+M & MP-A & MP-S & Oracle \\
\midrule
5\%  & .549 & .616 & \textbf{.679} & .666 & .601 & .642 & 1.00 \\
10\% & .551 & .636 & \textbf{.671} & .622 & .597 & .656 & 1.00 \\
15\% & .562 & .620 & \textbf{.651} & .651 & .622 & .621 & 1.00 \\
20\% & .546 & .610 & \textbf{.661} & .647 & .619 & .634 & 1.00 \\
25\% & .548 & .610 & \textbf{.662} & .651 & .621 & .616 & 1.00 \\
30\% & .550 & .605 & \textbf{.661} & .638 & .616 & .599 & 1.00 \\
40\% & .564 & .591 & \textbf{.645} & .625 & .586 & .608 & 1.00 \\
50\% & .562 & .588 & \textbf{.631} & .615 & .569 & .600 & 1.00 \\
\bottomrule
\end{tabular}
\end{table}

\section{LLaMA3-8B Results on MBPP}
\label{appendix:llama3}

Table~\ref{tab:llama3_main} reports LLaMA3-8B results on MBPP (heterogeneous costs). Despite exhibiting the opposite heterogeneity pattern to Qwen3 (signal rises with cost rather than collapsing), CST gains remain strong, consistent with the diagnosis that cross-stratum dispersion---rather than its direction---creates consequential global-allocation distortion.

\begin{table}[t]
\centering
\small
\caption{Mean LLaMA3-8B hit rate across 10 seeds on MBPP (heterogeneous costs, $N=500$).}
\label{tab:llama3_main}
\setlength{\tabcolsep}{2.8pt}
\begin{tabular}{lccccccc}
\toprule
$\beta$ & Rand & Thr & CST & C+M & MP-A & MP-S & Oracle \\
\midrule
5\%  & .624 & .828 & \textbf{.898} & .882 & .749 & .875 & 1.000 \\
10\% & .630 & .712 & \textbf{.882} & .844 & .689 & .837 & 1.000 \\
15\% & .629 & .692 & \textbf{.855} & .822 & .698 & .839 & 1.000 \\
20\% & .638 & .686 & \textbf{.837} & .795 & .693 & .794 & 1.000 \\
25\% & .643 & .698 & \textbf{.821} & .781 & .702 & .784 & .989 \\
30\% & .642 & .693 & \textbf{.803} & .755 & .692 & .763 & .995 \\
40\% & .650 & .683 & \textbf{.760} & .716 & .677 & .735 & .996 \\
50\% & .649 & .675 & \textbf{.728} & .693 & .673 & .704 & 1.000 \\
\bottomrule
\end{tabular}
\end{table}

\section{MATH Full Per-Budget Results}
\label{appendix:math}

Tables~\ref{tab:math_qwen3} and \ref{tab:math_llama3} report full per-budget results on MATH ($N=5000$). Qwen3-MATH shows Threshold as the dominant policy at $\beta{\le}30\%$, with CST gaining only at $\beta{=}50\%$. LLaMA3-MATH transitions at $\beta{\ge}30\%$ due to stronger cost-error correlation ($r{=}0.309$ vs.\ $0.193$). At $\beta{=}5\%$, MP-Adapt outperforms CST on Qwen3 but not LLaMA3, reflecting the cross-task boundary condition discussed in the main paper.

\begin{table}[t]
\centering
\small
\setlength{\tabcolsep}{2.8pt}
\caption{Mean Qwen3-8B hit rate across 10 seeds on MATH (heterogeneous costs, $N=5000$, pass@1=0.753).}
\label{tab:math_qwen3}
\begin{tabular}{lccccccc}
\toprule
$\beta$ & Rand & Thr & CST & C+M & MP-A & MP-S & Oracle \\
\midrule
5\%  & .250 & \textbf{.556} & .329 & .321 & .395 & .306 & 1.000 \\
10\% & .242 & \textbf{.524} & .322 & .308 & .356 & .288 & 1.000 \\
15\% & .244 & \textbf{.443} & .314 & .308 & .330 & .290 & 1.000 \\
20\% & .246 & \textbf{.384} & .308 & .303 & .310 & .285 & .997 \\
25\% & .247 & \textbf{.348} & .303 & .301 & .294 & .271 & .997 \\
30\% & .248 & \textbf{.320} & .301 & .296 & .284 & .265 & .938 \\
40\% & .247 & .285 & \textbf{.294} & .284 & .263 & .258 & .650 \\
50\% & .248 & .264 & \textbf{.283} & .272 & .250 & .252 & .526 \\
\bottomrule
\end{tabular}
\end{table}

\begin{table}[t]
\centering
\small
\setlength{\tabcolsep}{2.8pt}
\caption{Mean LLaMA3-8B hit rate across 10 seeds on MATH (heterogeneous costs, $N=5000$, pass@1=0.387).}
\label{tab:math_llama3}
\begin{tabular}{lccccccc}
\toprule
$\beta$ & Rand & Thr & CST & C+M & MP-A & MP-S & Oracle \\
\midrule
5\%  & .611 & \textbf{.870} & .828 & .771 & .777 & .759 & 1.000 \\
10\% & .605 & \textbf{.848} & .799 & .744 & .734 & .730 & .999 \\
15\% & .603 & \textbf{.810} & .776 & .727 & .698 & .710 & .999 \\
20\% & .608 & \textbf{.774} & .758 & .718 & .681 & .699 & 1.000 \\
25\% & .611 & .743 & \textbf{.745} & .707 & .667 & .684 & .999 \\
30\% & .615 & .719 & \textbf{.734} & .698 & .653 & .671 & .999 \\
40\% & .615 & .682 & \textbf{.715} & .690 & .629 & .650 & 1.000 \\
50\% & .616 & .655 & \textbf{.693} & .668 & .615 & .632 & 1.000 \\
\bottomrule
\end{tabular}
\end{table}

\section{GPT-4o-mini Results on MBPP}
\label{appendix:gpt4omini}

Table~\ref{tab:gpt4omini_main} reports GPT-4o-mini results on MBPP (heterogeneous costs). Its CST gains coincide with extreme Q4 signal collapse ($\rho_{\mathrm{Q4}}{=}0.059$ n.s. vs.\ Qwen3's 0.157 n.s.), supporting the use of cross-stratum heterogeneity---rather than global correlation alone---as the relevant diagnostic.

\begin{table}[t]
\centering
\small
\caption{Mean GPT-4o-mini hit rate across 10 seeds on MBPP (heterogeneous costs, $N=500$).}
\label{tab:gpt4omini_main}
\setlength{\tabcolsep}{2.8pt}
\begin{tabular}{lccccccc}
\toprule
$\beta$ & Rand & Thr & CST & C+M & MP-A & MP-S & Oracle \\
\midrule
5\%  & .535 & .426 & \textbf{.630} & .604 & .455 & .595 & .977 \\
10\% & .513 & .511 & \textbf{.592} & .592 & .524 & .552 & .988 \\
15\% & .526 & .535 & .603 & \textbf{.608} & .549 & .560 & .991 \\
20\% & .527 & .545 & .610 & \textbf{.610} & .553 & .579 & .993 \\
25\% & .533 & .556 & \textbf{.610} & .607 & .557 & .582 & .994 \\
30\% & .533 & .547 & .614 & \textbf{.620} & .563 & .578 & .995 \\
40\% & .540 & .562 & .612 & \textbf{.624} & .578 & .567 & .991 \\
50\% & .539 & .578 & \textbf{.612} & .601 & .580 & .576 & .996 \\
\bottomrule
\end{tabular}
\end{table}

\section{Homogeneous Cost Control Experiment}
\label{appendix:homo}

Under homogeneous costs ($c_t \equiv 0.5$), CST degenerates exactly to Threshold (Table~\ref{tab:homo_hr}), confirming gains derive from heteroskedasticity correction. MP-Adapt underperforms Threshold (avg.\ ${-}7.5$pp across budgets), indicating that cost heterogeneity alone does not explain the underperformance of globally shared online adaptation. The primary cause is weak global signal strength ($\rho{\approx}0.178$) present under both cost settings.

\begin{table}[t]
\centering
\small
\caption{Hit rate under homogeneous costs ($c_t \equiv 0.5$, MBPP, Qwen3-8B).}
\label{tab:homo_hr}
\setlength{\tabcolsep}{2.8pt}
\begin{tabular}{lccccccc}
\toprule
$\beta$ & Rand & Thr & CST & C+M & MP-A & MP-S & Oracle \\
\midrule
5\%  & .480 & .680 & .680 & \textbf{.800} & .560 & .600 & 1.000 \\
10\% & .440 & .740 & .740 & \textbf{.660} & .580 & .580 & 1.000 \\
20\% & .480 & \textbf{.650} & \textbf{.650} & \textbf{.650} & .590 & .610 & 1.000 \\
30\% & .553 & .653 & .653 & .647 & .673 & \textbf{.693} & 1.000 \\
50\% & .576 & .669 & .669 & \textbf{.658} & .612 & .636 & 1.000 \\
\bottomrule
\end{tabular}
\end{table}

\section{Memory Augmentation Ablation}
\label{appendix:memory}

Memory-augmented CST retrieves verified samples within the same stratum as a fourth signal $h_t^{(4)}$. At $\beta{=}20\%$, optimal weight $w{\approx}0.1$--$0.3$ yields marginal gains ($+1.0$--$+3.0$pp over CST 0.644), while $w{\ge}0.5$ causes collapse (0.515 vs.\ CST 0.644). The memory signal correlates weakly with errors ($r{=}0.059$, $p{=}0.47$), confirming that memory amplifies but cannot create signal where none exists. GPT-4o-mini shows similarly weak memory gains ($+1.4$pp max).

\section{Per-Stratum Split-Conformal Baseline}
\label{appendix:conformal}

Table~\ref{tab:strat_criteria_supp} reports the negative-control partition comparison used in the main-text diagnosis. Only the ex-ante cost partition improves over global Threshold, while random partitions and noisy signal-strength bins degrade performance.

\begin{table}[t]
\centering
\small
\caption{Stratification criterion comparison ($\beta=20\%$).}
\label{tab:strat_criteria_supp}
\begin{tabular}{lcc}
\toprule
Criterion & Hit Rate & $\Delta$ vs.\ Global \\
\midrule
None (global) & 0.604 & --- \\
Random partition & 0.590 & $-$0.014 \\
By signal strength & 0.471 & $-$0.133 \\
\textbf{By ex-ante cost (CST)} & \textbf{0.644} & \textbf{$+$0.040} \\
Oracle partition & 1.000 & $+$0.396 \\
\bottomrule
\end{tabular}
\end{table}

Table~\ref{tab:conformal} reports per-stratum split-conformal quantile calibration (50\% calibration split, fixed thresholds). This group-conditional conformal baseline partitions by cost like CST but uses one-shot calibrated thresholds. Conformal stratification improves over global Threshold universally, confirming stratification is the primary gain driver. CST's online quantile adaptation adds further gains in most regimes; conformal exceeds CST only at GPT-4o-mini $\beta{=}10\%$ ($+9.5$pp, where fixed calibration benefits from larger effective calibration samples) and LLaMA3-MATH $\beta{=}20\%$ ($+1.5$pp).

\begin{table}[t]
\centering
\footnotesize
\setlength{\tabcolsep}{2.2pt}
\caption{Per-stratum split-conformal hit rate and $\Delta$ vs.\ CST (pp, positive = exceeds CST).}
\label{tab:conformal}
\begin{tabular}{lccc}
\toprule
$\beta$ & Qwen3 & LLaMA3 & GPT-4o-mini \\
\midrule
\multicolumn{4}{c}{\textit{MBPP}} \\
\midrule
10\% & .667\,($-3.1$) & .852\,($-3.1$) & .648\,($+9.5$) \\
20\% & .626\,($-1.7$) & .865\,($+2.0$) & .644\,($+2.4$) \\
30\% & .650\,($-1.9$) & .797\,($-2.0$) & .604\,($-3.2$) \\
50\% & .640\,($-1.5$) & .749\,($+0.7$) & .616\,($+0.1$) \\
\midrule
\multicolumn{4}{c}{\textit{MATH}} \\
\midrule
10\% & .315\,($-0.8$) & .822\,($+0.5$) & --- \\
20\% & .309\,($-1.0$) & .769\,($+1.5$) & --- \\
30\% & .305\,($+0.8$) & .741\,($+0.3$) & --- \\
50\% & .289\,($+0.9$) & .704\,($+0.5$) & --- \\
\bottomrule
\end{tabular}
\end{table}

\section{Sensitivity to Number of Strata ($K$)}
\label{appendix:ksweep}

Table~\ref{tab:k_sweep} reports CST hit rate vs.\ $K$ on MATH ($K{=}1$ is global Threshold). Qwen3-MATH degrades monotonically with $K$ under weak heterogeneity ($r{=}0.193$); LLaMA3-MATH ($r{=}0.309$) peaks at $K{=}3$--$4$. Over-partitioning ($K{\ge}6$) consistently degrades performance, confirming that within-stratum sample sizes must be sufficient for reliable quantile estimation, as predicted by the allocation-distortion analysis.

\begin{table}[t]
\centering
\small
\setlength{\tabcolsep}{2.8pt}
\caption{CST hit rate vs.\ strata count $K$ (MATH). $K{=}1$ is global Threshold.}
\label{tab:k_sweep}
\begin{tabular}{lccccc|ccccc}
\toprule
 & \multicolumn{5}{c}{Qwen3-8B} & \multicolumn{5}{c}{LLaMA3-8B} \\
\cmidrule(lr){2-6}\cmidrule(lr){7-11}
$K$ & 5\% & 10\% & 20\% & 30\% & 50\% & 5\% & 10\% & 20\% & 30\% & 50\% \\
\midrule
1 & .569 & .529 & .384 & .316 & .266 & .869 & .839 & .769 & .718 & .658 \\
2 & .434 & .427 & .376 & .326 & .286 & .836 & .823 & .789 & .756 & .691 \\
3 & .380 & .373 & .336 & .327 & .284 & .856 & .794 & .769 & .744 & .694 \\
4 & .341 & .323 & .317 & .298 & .279 & .823 & .789 & .745 & .736 & .695 \\
6 & .293 & .306 & .285 & .277 & .269 & .822 & .788 & .731 & .716 & .690 \\
8 & .295 & .260 & .257 & .259 & .253 & .785 & .751 & .707 & .699 & .670 \\
\bottomrule
\end{tabular}
\end{table}

\section{Full HGA Gate Results}
\label{appendix:hga}

Table~\ref{tab:hga_full} reports Heterogeneity-Gated Allocation (HGA) across budgets in the calibration reanalysis. HGA uses the same 50-sample score initialization as Threshold and CST, then uses a separate 50\% diagnostic split to decide whether the CST gate should open. The pattern matches the intended role of the gate: HGA stays close to CST on MBPP, where cost-aligned stratification is useful, and reverts to Threshold on Qwen3-MATH, where fixed stratification is harmful at low and medium budgets.

Table~\ref{tab:hga_gate} summarizes the gate decisions at $\beta{=}20\%$. These decisions are made before deployment on the evaluation stream; HGA does not choose the better test-set policy after observing test outcomes.

\begin{table}[t]
\centering
\small
\setlength{\tabcolsep}{3pt}
\caption{HGA gate decisions at $\beta{=}20\%$ in the calibration reanalysis.}
\label{tab:hga_gate}
\begin{tabular}{lccc}
\toprule
Setting & Gate & Policy & Outcome \\
\midrule
MBPP-Qwen3 & open & CST & retains gain \\
MBPP-LLaMA3 & open & CST & retains gain \\
MBPP-GPT-4o-mini & open & CST & partial gain \\
MATH-Qwen3 & closed & Threshold & avoids loss \\
MATH-LLaMA3 & closed & Threshold & near Threshold \\
\bottomrule
\end{tabular}
\end{table}

\begin{table}[t]
\centering
\small
\setlength{\tabcolsep}{3pt}
\caption{Full HGA calibration-reanalysis hit rate results across budgets (heterogeneous costs, 10 seeds). Bold indicates the best among Threshold, CST, and HGA.}
\label{tab:hga_full}
\begin{tabular}{llccc}
\toprule
Setting & $\beta$ & Threshold & CST & HGA \\
\midrule
MBPP-Qwen3 & 10\% & .616 & \textbf{.681} & .677 \\
MBPP-Qwen3 & 20\% & .624 & \textbf{.654} & .653 \\
MBPP-Qwen3 & 30\% & .606 & \textbf{.656} & \textbf{.656} \\
MBPP-Qwen3 & 50\% & .579 & \textbf{.631} & \textbf{.631} \\
MBPP-LLaMA3 & 10\% & .689 & \textbf{.847} & .826 \\
MBPP-LLaMA3 & 20\% & .677 & \textbf{.813} & .804 \\
MBPP-LLaMA3 & 30\% & .686 & \textbf{.787} & .770 \\
MBPP-LLaMA3 & 50\% & .674 & \textbf{.725} & \textbf{.725} \\
MBPP-GPT-4o-mini & 10\% & .495 & \textbf{.571} & .568 \\
MBPP-GPT-4o-mini & 20\% & .533 & \textbf{.610} & .591 \\
MBPP-GPT-4o-mini & 30\% & .553 & \textbf{.614} & .603 \\
MBPP-GPT-4o-mini & 50\% & .572 & \textbf{.605} & .598 \\
MATH-Qwen3 & 10\% & \textbf{.525} & .330 & \textbf{.525} \\
MATH-Qwen3 & 20\% & \textbf{.385} & .307 & \textbf{.385} \\
MATH-Qwen3 & 30\% & \textbf{.321} & .300 & .316 \\
MATH-Qwen3 & 50\% & .264 & \textbf{.284} & .280 \\
MATH-LLaMA3 & 10\% & \textbf{.844} & .798 & .835 \\
MATH-LLaMA3 & 20\% & \textbf{.775} & .756 & .774 \\
MATH-LLaMA3 & 30\% & .717 & \textbf{.734} & .722 \\
MATH-LLaMA3 & 50\% & .656 & \textbf{.695} & .668 \\
\bottomrule
\end{tabular}
\end{table}

\section{Budgeted Model Routing Reanalysis}
\label{appendix:routing}

We further reinterpret the MBPP per-instance records as a budgeted model-routing problem. The cheap model is run first, and routing to GPT-4o-mini is valuable only when the cheap model fails and GPT-4o-mini succeeds. Policies use the cheap model's uncertainty signals and an ex-ante prompt-cost proxy; the metric is routing hit rate among selected examples. This is a reanalysis of existing records, not a new model run.

Table~\ref{tab:routing_reanalysis} shows a conditional pattern. For Qwen3, routing benefits are sparse (9.6\% of examples), and global Threshold is best at $\beta{=}20\%$. For LLaMA3, routing benefits are denser (15.8\%), and stratified allocation improves over Threshold: MP-Strat reaches .183 and CST .180 vs.\ .173. Thus the routing setting supports the paper's diagnostic interpretation rather than a universal claim that CST must dominate: stratification helps when the benefit signal has exploitable conditional structure, and simple global thresholding can remain competitive when the benefit label is too sparse.

\begin{table}[t]
\centering
\small
\setlength{\tabcolsep}{2.2pt}
\caption{Budgeted model-routing reanalysis on MBPP at $\beta{=}20\%$. Hit rate measures the fraction of routed examples for which the cheap model fails and GPT-4o-mini succeeds. Bold indicates the best non-oracle policy in each row.}
\label{tab:routing_reanalysis}
\begin{tabular}{llccccc}
\toprule
Route & $\beta$ & Rand. & Thr. & MP-A & MP-S & CST \\
\midrule
Qwen3$\rightarrow$GPT-4o-mini & 20\% & .096 & \textbf{.109} & .098 & .100 & .103 \\
LLaMA3$\rightarrow$GPT-4o-mini & 20\% & .153 & .173 & .149 & \textbf{.183} & .180 \\
\bottomrule
\end{tabular}
\end{table}

\section{Visual Illustration of Global vs.\ Stratum-Adaptive Thresholding}
\label{appendix:fig1}

\begin{figure*}[t]
\centering
\includegraphics[width=\textwidth]{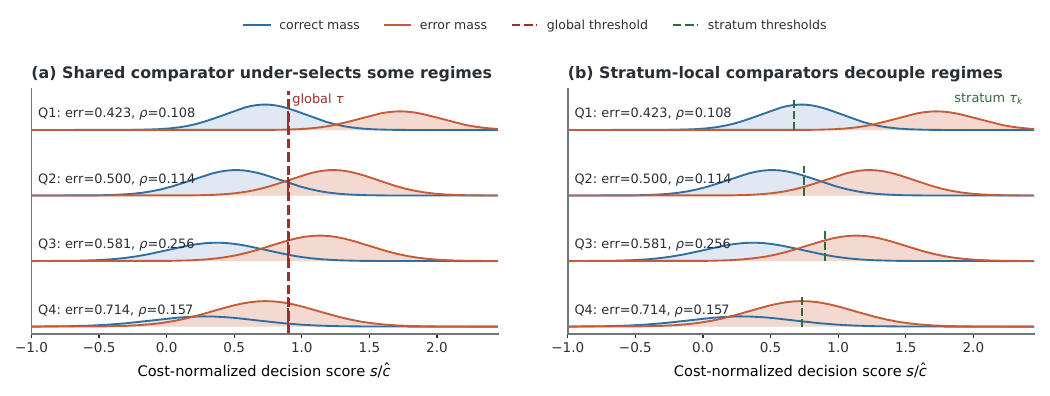}
\caption{Global thresholding under heteroskedastic costs. Synthetic score distributions parameterized by empirical $\rho_k$ and error rates (MBPP Qwen3-8B, $\beta{=}20\%$). Decision scores incorporate the cost denominator: higher cost deflates scores in Q4, inflates in Q1. \textbf{Left:} a single global threshold $\tau$ under-selects high-cost error-dense strata (Q4, highest error) and strong-signal strata (Q3), while over-selecting low-cost strata (Q1, Q2). \textbf{Right:} stratum-adaptive thresholds $\tau_k$ (CST) computed via per-stratum quantiles correct the misallocation.}
\label{fig:cst_vs_threshold_appendix}
\end{figure*}
\end{document}